\newcommand{\true}{{\bf true}}
\newcommand{\false}{{\bf false}}
\newtheorem{lemma}{Lemma}
\newtheorem{theorem}{Theorem}
\newtheorem{definition}{Definition}
\newtheorem*{intrinsic}{Intrinsic}
\begin{document}
% The file aaai.sty is the style file for AAAI Press 
% proceedings, working notes, and technical reports.
%
\title{Combining Probabilistic, Causal, and Normative Reasoning in CP-logic}
\author{
Sander Beckers \and Joost Vennekens\\
KU Leuven -- University of Leuven\\
\{Sander.Beckers,Joost.Vennekens\}@cs.kuleuven.be
}
\nocopyright
\maketitle
\begin{abstract}
\begin{quote}
In recent years the search for a proper formal definition of actual causation -- i.e., the relation of cause-effect as it is instantiated in specific observations, rather than general causal relations -- has taken on impressive proportions. In part this is due to the insight that this concept plays a fundamental role in many different fields, such as legal theory, engineering, medicine, ethics, etc. Because of this diversity in applications, some researchers have shifted focus from a single idealized definition towards a more pragmatic, context-based account. For instance, recent work by Halpern and Hitchcock draws on empirical research regarding people's causal judgments, to suggest a graded and context-sensitive notion of causation. Although we sympathize with many of their observations, their restriction to a merely qualitative ordering runs into trouble for more complex examples. Therefore we aim to improve on their approach, by using the formal language of CP-logic (Causal Probabilistic logic), and the framework for defining actual causation that was developed by the current authors using it. First we rephrase their ideas into our quantitative, probabilistic setting, after which we modify it to accommodate a greater class of examples. Further, we introduce a formal distinction between statistical and normative considerations.
\end{quote}
\end{abstract}

\noindent 

\section{Introduction}

In their forthcoming article {\em Graded Causation and Defaults} Halpern and Hitchcock -- HH -- quite rightly observe that not only is there a vast amount of disagreement regarding actual causation in the literature, but there is also a growing number of empirical studies which show that people's intuitions are influenced to a large degree by factors which up to now have been ignored when dealing with causation. For example, our judgments on two similarly modelled cases may differ depending on whether it takes place in a moral context or a purely mechanical one, or on whether what we take to be the default setting, or on whether we take something to be a background condition or not, etc. \cite{knobe08,moore09,hitchcockknobe09}. This has led HH to develop a flexible framework that allows room for incorporating different judgments on actual causation. More specifically, in their view the difference between cases that are modelled using similar structural models depends on which worlds we take to be more normal than others in the different contexts. Therefore their solution is to extend structural models with a normality ranking on worlds, and use it to adapt and order our judgments of actual causation in a manner suited for the particular context.

We sympathize with many of their observations, and we agree that normality considerations do influence our causal judgments. However, we find their representation of normality lacking for two reasons. First, although they emphasize the importance of distinguishing between statistical and normative normality, they use a single ranking for both. Second, they refrain from using probabilities to represent statistical normality, and instead work with a partial preorder over worlds. In this paper we offer an alternative, in which statistical normality is represented in the usual way, i.e., by means of probabilities. As we will show, such a quantitative representation of statistical normality avoids a number of problems that HH's ordinal representation runs into. To cope with normative normality, we introduce a separate notion of norms. As a technical tool for our analysis, we will use a normative extension of a general framework for the study of actual causation \cite{hallpaper} that was defined in the language of CP-logic (Causal Probabilistic logic) \cite{vennekens09}. An important property of our technical approach is that we implement the concept of normality by means of a syntactic transformation, similar to how previous work has defined the concept of interventions in CP-logic \cite{vennekens:jelia}. This allows us to improve on the HH approach, by properly accommodating the counterfactual nature of causation. The result will be a more generally applicable and yet simpler approach.

In Section \ref{sec:cp} we shortly present the formal language of CP-logic and use it to formulate a definition of actual causation. The following section presents the extension to actual causation by HH. We translate their work into the CP-logic framework in Section \ref{sec:hhcp}. Section \ref{sec:cd} contains a first improvement to this translation, followed by some examples and our final extension to actual causation in Section \ref{sec:norms}.

We will use the following story from \cite{knobe08} as our running example, as it illustrates the influence normative considerations can have on our causal attributions:
\begin{quote}
The receptionist in the philosophy department keeps her desk stocked with pens. The administrative assistants are allowed to take pens, but faculty members are supposed to buy their own. The administrative assistants typically do take the pens. Unfortunately, so do the faculty members. The receptionist repeatedly e-mailed them reminders that only administrators are allowed to take the pens. On Monday morning, one of the administrative assistants encounters professor Smith walking past the receptionist’s desk. Both take pens. Later, that day, the receptionist needs to take an important message...but she has a problem. There are no pens left on her desk.
\end{quote}

\section{Preliminaries: CP-logic}\label{sec:cp}

We give a short, informal introduction to CP-logic. A detailed description can be found in \cite{vennekens09,vennekens:jelia}. The basic syntactical unit of CP-logic is a CP-law, which takes the general form of $Head \leftarrow Body$. The body can in general consist of any first-order logic formula. However, in this paper, we restrict our attention to grounded formulas in CNF. The head contains a disjunction of atoms annotated with probabilities, representing the possible effects of this law. When the probabilities in a head do not add up to one, we implicitly assume an {\em empty} disjunct, annotated with the remaining probability.

Each CP-law models a specific {\em causal mechanism}. Informally, we take this to mean that, if the $Body$ of the law is satisfied, then at some point it will be applied, meaning one of the disjuncts in the $Head$ is chosen, each with their respective probabilities. If a disjunct is chosen containing an atom that is not yet $\true$, then this law causes it to become $\true$; otherwise, the law has no effect. A finite set of such CP-laws forms a CP-theory, and represents the causal structure of the domain at hand. The domain unfolds by laws being applied one after another, where multiple orders are often possible, and each law is applied at most once. To illustrate this, we describe the relevant events on the Monday morning from our running example. The domain consists of the variables $Prof$ and $Assistant$, which stand for the professor respectively the assistant taking a pen, and $NoPens$, which is true when there are no pens left.
The causal structure can be represented by the following CP-theory $T$:
\begin{align} 
(Prof: 0.7) &\leftarrow.\label{professor}\\
(Assistant: 0.8) &\leftarrow.\label{assistant}\\
NoPens &\leftarrow Prof \land Assistant.\label{nopens}
\end{align}
The first two of these causal laws are {\em vacuous} (i.e, they will be applied in every story) and {\em non-deterministic}. The first one results in either the professor taking the pen, or nothing at all. Since it is stated that professors typically do take pens, we simply chose some probability above $0.5$. Similarly, the second law describes the possible behaviour of the assistant. The last law is {\em deterministic}, i.e., it only has one possible outcome (where we leave implicit the probability $1$).

The given theory summarizes all possible {\em stories} that can take place in this model. One of those is what in fact did happen that Monday morning: both the professor and the assistant take a pen, leaving the receptionist faced with no pens. The other stories consist in only the professor taking a pen, only the assistant doing so, or neither.

To formalize this idea, the semantics of CP-logic uses {\em probability trees} \cite{shafer:book}. For this example, one such tree is shown in Figure \ref{fig:tree}. Here, each node represents a state of the domain, which is characterized by an assignment of truth values to the atomic formulas, in this case $Prof$, $Assistant$ and $NoPens$. In the initial state of the domain (the root node), all atoms are assigned their {\em default} value $\false$. In this example, initially there are still pens left, and neither the professor nor the assistant have taken one. The children of a node $x$ are the result of the application of a law: each edge $(x,y)$ corresponds to a specific disjunct that was chosen from the head of the law that was applied in node $x$. Because in this particular tree the professor arrives first, there are two child-nodes, corresponding to the law \eqref{professor} being applied and resulting in a pen being taken (left child) or not (right child). In the former case, the assignment is updated by setting $Prof$ to $\true$, its {\em deviant} value. Similarly, the subsequent nodes represent the possible outcomes of the application of law \eqref{assistant}, and whether or not this results in the lack of pens. The leftmost branch is thus the formal counterpart of the above story.
\begin{figure}
\centerline{
\xymatrix@H=0.2cm@C=3em{
&&\bullet \ar[ld]^{0.7}|(.3){\text{Prof takes\hspace*{0.7cm}}} \ar[rd]_{0.3}|(.3){\text{\hspace*{0.6cm}does not}}\\
&\bullet \ar[ld]_{0.8}|(.2){\text{Assistant takes\hspace*{0.95cm}}} \ar[d]^{0.2}|(.3){\hspace*{0.95cm}\text{does not}} && \bullet \ar[d]_{0.8}|(.2){\text{Assistant takes}\hspace*{0.95cm}} \ar[rd]^{0.2}|(.2){\hspace*{0.95cm}\text{does not}} \\
\bullet \ar[d]|(.3){\text{NoPens\hspace*{0.7cm}}} & \bullet && \bullet & \bullet \\
\circ  \\
}}
\caption{Probability tree for the Pen-vignette.\label{fig:tree}}
\label{fig:tree}
\end{figure}
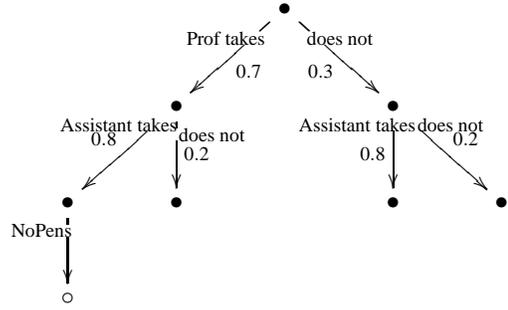
A probability tree of a theory $T$ defines an {\em a priori} probability distribution $P_T$ over all stories that might happen in this domain, which can be read off the leaf nodes of the branches by multiplying the probabilities on the edges. For instance, the probability of our example story is $0.56$. We have shown here only one such probability tree, but we can construct another one as well by applying the laws in different orders. 

An important property however is that all trees defined by the same theory result in the same probability distribution. To ensure that this property holds even when there are bodies containing negative literals, CP-logic makes use of the well-founded semantics. Simply put, this means the condition for a law to be applied in a node is not merely that its body is currently satisfied, but that this will remain so. This implies that a negated atom in a body should not only be currently assigned $\false$, but actually has to have become impossible, so that it will remain $\false$ through to the end-state. For atoms currently assigned $\true$, it always holds that they remain $\true$, hence here there is no problem.

\subsection{Operations on CP-theories}

We specify some operations on CP-logic theories that will be used throughout this paper.  Assume we have a theory $T$ and a branch $b$ of one of $T$'s probability trees, such that both $C$ and $E$ hold in its leaf. To make $T$ deterministic {\em in accordance with the choices made in $b$}, means to transform $T$ into $T^b$ by replacing the heads of the laws that were applied in $b$ with the disjuncts which were chosen from those heads in $b$. 

For our example story $b$, $T^b$ is:
\begin{align} 
Prof &\leftarrow.\\
Assistant &\leftarrow.\label{assdet}\\
NoPens &\leftarrow Prof \land Assistant.\label{nopens2}
\end{align}
We will use Pearl's $do()$-operator to indicate an intervention \cite{pearl:book}. The intervention on a theory $T$ that makes variable $C$ false, denoted by $do(\lnot C)$, removes $C$ from the head of any law in which it occurs, yielding $T|do(\lnot C)$.  For example, to prevent the professor from taking a pen, the resulting theory $T|do(\lnot Prof)$ is given by:
\begin{align} 
&\leftarrow.\label{empty}\\
(Assistant: 0.8)&\leftarrow.\\
NoPens &\leftarrow Prof \land Assistant.
\end{align}
Laws with an empty head, such as \eqref{empty}, can also simply be omitted. The analogous operation $do(C)$ on a theory $T$ corresponds to adding the deterministic law $C \leftarrow$.

\subsection{Actual Causation in CP-logic}\label{subsec:ac}

In recent years many proposals for defining actual causation have entered the scene, such as \cite{hall03,hall04,hall07,halpernpearl05a,vennekens11,beckers}. In \cite{hallpaper} we developed a general definition, which encompasses several of these. The idea behind our general definition is that one should first modify the theory $T$ into a theory $T^*$ that takes into account the influence of the actual story $b$, and then judge causation by using the probabilistic extension of counterfactual dependence. The available operations to construct $T^*$ are to make laws deterministic in accordance with the choices made in $b$, and to remove laws. A specific definition is obtained by specifying which laws fall under the first category -- called the {\em intrinsic} laws and noted as $Int$ -- and those which fall under the second -- called the {\em irrelevant} laws and noted as $Irr$.\footnote{These specifications are implicitly parametrized with regard to a theory, a story, and events $C$ and $E$.} We then construct $T^*$ as $T \setminus (Irr \cup Int) \cup Int^b$. 

\begin{definition}\label{acgen}[Actual causation] Given a theory $T$ and a branch $b$ such that both $C$ and $E$ hold in its leaf. We define that $C$ is an {\em actual cause} of $E$ in $(T,b)$ if and only if $P_{T^*}(\lnot E | do(\lnot C)) > 0$.\label{def:actc}
\end{definition}
If desired, the degree to which different events actually caused some effect can then be further compared by comparing their associated probabilities. 

HH use the HP-definition \cite{halpernpearl05a} as their working definition to illustrate their extension to actual causation. However, they stress the generality of their approach, and mention that one could for example apply it to Hall's definition from \cite{hall07}. In \cite{hallpaper}, we reformulated Hall's definition as an instantation of the general definition here presented. We will choose this reformulation to work with in this paper, because this makes it easier to apply the current discussion to other instantiations.

By $Leaf_b$ we denote the set of true variables in the leaf of a branch $b$. For a law $r$ that was applied in $b$, we call the node in $b$ that results from its application $Node_r^b$. Our working definition states that no laws are irrelevant, and determines the intrinsic laws as follows:

\begin{intrinsic}
A non-deterministic law $r$ of $T$ that was applied in $b$ is intrinsic iff there is no branch $d$ passing through a sibling of $Node_r^b$ such that $\{ C,E \} \subseteq Leaf_d \subseteq Leaf_b$.
\end{intrinsic}

If we look at the probability tree in Figure \ref{fig:tree}, we see that $E = NoPens$ only occurs in $Leaf_b$, and hence both the laws \eqref{professor} and \eqref{assistant} are intrinsic. This gives $T^*=T^b$. As there is counterfactual dependency of $NoPens$ on both $Assistant$ and $Prof$, they are full causes. However it was shown in \cite{knobe08} that most people do not judge the assistant to be a cause, or at least far less so than the professor is. This illustrates how our causal judgment can be influenced by normative considerations.

\section{The HH Extension to Actual Causation}\label{sec:hhac}

In this section we succinctly present the graded, context-dependent approach to actual causation from \cite{hh14}. As with much work on actual causation, HH frame their ideas using structural models. Such a model consists of a set of equations, one for each endogenous variable, that express the functional dependencies of the endogenous variables on others. These dependencies are acyclic, and have their roots in the exogenous variables. HH only consider structural models with two types of (Boolean) endogenous variables: the ones that deterministically depend on other endogenous variables, and those that depend directly on exogenous variables. 

HH take a {\em world} to be an assignment to all endogenous variables. Given a structural model, each assignment $u$ of the exogenous variables -- a so-called {\em context} -- determines a unique world, denoted by $s_u$. 

An extended structural model $(M,\succeq)$ consists of a structural model $M$ together with a normality ranking $\succeq$ over worlds. This ranking is a partial pre-order informed by our -- possibly subjective -- judgments about what we take to be normal in this context. It is derived by considering the typicality of the values the variables in a world take. For the variables that depend only on other endogenous variables, things are straightforward: it is typical that these variables take the value dictated by their deterministic equation. For the other variables, i.e., those that depend only on the exogenous variables, their possible values are ranked according to typicality. A world $s$ is more normal than $s'$ if there is at least one variable that takes a more typical value in $s$ than it does in $s'$, and no variable takes a less typical value. Typicality and normality are meant to encompass both statistical and normative judgments. Although HH make no syntactic distinction between the two kinds of normality, in the examples discussed they do differentiate between them informally. We will also make a formal distinction between the two, because in this manner we can incorporate information regarding both. 

Say we have a story, i.e., an assignment to all variables, such that $C$ and $E$ happen in it. In order to establish whether $C$ is a cause of $E$, any definition in the counterfactual tradition restricts itself to some particular set of counterfactual worlds in which $\lnot C$ holds and checks whether also $\lnot E$ holds in these worlds. If this set contains a world which serves to justify that $C$ is indeed a cause of $E$, then HH call such a world a {\em witness} of this. HH adapt a given definition of actual causation using the normality ranking to disallow worlds that are less normal than the actual world, in order to reflect the influence of normality on possible causes. 

 \begin{definition}\label{hh}[HH-extension of actual causation.] Given are an extended structural model $(M, \succeq)$ and context $u$, such that both $C$ and $E$ hold in $s_u$. $C$ is an HH-actual cause of $E$ in $(M, \succeq, u)$ iff $C$ is an actual cause of $E$ in $(M, u)$ when we consider only witnesses $w$ such that $w \succeq s_u$.
\end{definition}

Since we have a ranking on the normality of worlds, this definition straightforwardly leads to an ordering between different causes indicating the strength of the causal relationship by looking at the highest ranked witness for a cause, which is called its {\em best witness}. 

In case of our example, we get that $\{ \lnot Prof, Assistant, \lnot NoPens \} \succ \{ Prof, Assistant, NoPens \}$ $\succ \{ Prof, \lnot Assistant, \lnot NoPens \}$. Since the actual world is in the middle, the first of these can serve as a witness for $Prof$ being a cause, but the last may not be used to judge $Assistant$ to be a cause.  

\section{The HH Extension in CP-logic}\label{sec:hhcp}

We proceed with translating the HH-extension of actual causation into CP-logic. To get there, we will translate one by one all of the required concepts. 

A structural model $M$ in the HH-setting corresponds to a CP-theory $T$: a direct dependency on the exogenous variables results in a non-deterministic, vacuous law (such as \eqref{professor} and \eqref{assistant}), while a dependency on only endogenous variables results in a deterministic law (such as \eqref{nopens}).

A world $s$ described by $M$ then corresponds to a branch $b$ -- or to be more precise, the leaf of a branch -- of a probability tree of $T$. In Def. \ref{hh} we restrict an extended structural model to those worlds that are at least as normal as the actual world. The CP-logic-equivalent of this will be the {\em normal refinement} of $T$ according to $b$, which is a theory that describes those stories which are at least as normal as $b$. 

First we introduce two operations on CP-laws, corresponding to the two different interpretations of normality. Assume at some point a CP-law $r$ was applied in $b$, choosing the disjunct $r_b$ that occurs in its head. 

On a probabilistic reading, an alternative application of $r$ is at least as normal as the actual one if a disjunct is chosen which is at least as likely as $r_b$. Therefore the {\em probabilistically normalized} refinement of $r$ according to $b$ -- denoted by $r^{PN(b)}$ -- consists in $r$ without all disjuncts that have a strictly smaller probability than $r_b$. Remain the laws that were not applied in $b$. In line with the understanding of normality from HH, we choose to handle these such that they cannot have effects which result in a world less normal than the actual world. Thus we remove those disjuncts which have a probability lower than $0.5$, and are inconsistent with the assignment in the leaf of $b$. Further, say the total probability of the removed disjuncts in some law is $p$, then we renormalize the remaining probabilities by dividing by $1-p$ (Unless $p=1$, then we simply remove the law.) 

\begin{definition}
Given a theory $T$, and a story $b$, we define the {\em probabilistically normalized refinement} of $T$ according to $b$ as $T^{PN(b)}:=\{ r^{PN(b)} | r \in T\}$.
\end{definition}
In case of our example, $T^{PN(b)} = T^b$, shown earlier.

A second reading of normality considers not what did or could happen, but what ought to happen. To allow such considerations, we extend CP-logic theories with {\em norms}. Everything that can possibly happen is described by the CP-laws of a theory, hence we choose to introduce {\em prescriptive} norms as corrections to {\em descriptive} CP-laws. These corrections take the form of alternative probabilities for the disjuncts in the head of a law, which represent how the law should behave. (A more general approach could be imagined, but for the present purpose this extension will suffice.) These probabilities will be enclosed in curly braces, and have no influence on the actual behaviour of a theory. However, if we wish to look at how the world should behave, we can enforce the norms by replacing the original probabilities of a theory $T$ with the normative ones. For example, extending law \eqref{professor} with the norm that the professor shouldn't take pens, even though he often does, gives $(Prof: 0.7 \{p\}) \leftarrow$, where $1 \gg p$. The {\em normatively normalized} refinement is then given by the CP-law $(Prof: p)\leftarrow$. To properly capture the HH definition, our normalized theory should allow all worlds that are at least as normal as $b$, including of course $b$ itself. For this reason, we here restrict attention to norms with $0 < p < 1$, because a norm $p=0$ or $p=1$ could make the actual world $b$ impossible. (This restriction is lifted in our own proposal in Section \ref{sec:final}).

\begin{definition}
Given an extended theory $T$, i.e., a theory also containing norms, we define the {\em normatively normalized refinement} of $T$ as $T^{NN}:=\{ r^{NN} | r \in T\}$, where $r^{NN}$ is the normatively normalized refinement of $r$.
\end{definition}

We can combine both senses of normality, as follows:

\begin{definition}
Given an extended theory $T$, and a story $b$, we define the {\em normal refinement} of $T$ according to $b$ as $T^{Normal(b)}:=(T^{NN})^{PN(b)}$.
\end{definition}
The normal refinement according to $b$ is constructed out of $T$ by eliminating all the disjuncts with values of variables that are less normal (either probabilistically or normatively) than the values from $b$, and thus it allows precisely those stories which are at least as normal as $b$.\footnote{A formal proof of this can be found in the Appendices, as Lemma \ref{lem2}.} In case of our example, $T^{Normal(b)}$ is given by $(Prof: p)\leftarrow$ and laws \eqref{assdet} and \eqref{nopens2}.

Remains to be explained how we get from a given extended structural model to an extended CP-theory. The normality ranking is derived by considering what is typical for those variables depending directly on the exogenous variables, i.e., those that we represent by $X :p \leftarrow$. HH use statements that take the form: ``it is typical for the variable $X$ to be $\true$", or ``it is typical for it to be $\false$". In CP-logic this becomes: $p > 0.5$, and $p < 0.5$ respectively. A statement of the form: ``it is more typical for $X$ to be $\true$ than for $Y$ to be $\true$" translates in an ordering on the respective probabilities. If the typicality statement is of the normative kind, then it is best represented by norms in CP-logic. Thus if there is a norm regarding $X$ then the law will take the extended form $X :p \{q \} \leftarrow$.

In Def. \ref{hh}, we have that a world is an acceptable witness only if it belongs to the set of worlds allowed by the definition of actual causation, and it is at least as normal as the actual world. Similarly, we need to limit the stories allowed by Def. \ref{acgen}  -- which are described by $T^*$ -- to those stories which are at least as normal as $b$. We would like to do this in exactly the same manner as we did for $T$, i.e., by looking at $(T^{*})^{Normal(b)}$. 
Unfortunately we need to treat the unique law that contains $C$ in its head -- denoted by $r(C)$ -- somewhat special. That's because in Def. \ref{acgen} the intervention $do(\lnot C)$ makes this law deterministic with an empty head. As we want to take into account the (ab)normality of $\lnot C$, we work with the normal refinement instead. Hence we denote by $T^{**}$ the theory identical to $T^*$ except that it contains the normal refinement of $r(C)$.

\begin{definition} Given an extended theory $T$, a story $b$ such that $C$ and $E$ hold in its leaf, and the theory $T^*$ as described in Section \ref{subsec:ac}. We define the {\em normal refinement} of $T^*$ according to $b$ and $C$ as $T^{Normal(b)*} := (T^{**})^{Normal(b)}$.
\end{definition}

Besides the normality and belonging to $T^*$, recall that the remaining requirement for a story to be a witness, is that $\lnot C$ and $\lnot E$ hold in it. This leads us to the following formulation of the HH-approach in CP-logic.

\begin{definition}\label{hhcplogic}[HH-CP-logic-extension of actual causation] Given an extended theory $T$, and a branch $b$ such that both $C$ and $E$ hold in its leaf. We define that $C$ is an HH-CP-logic-actual cause of $E$ in $(T,b)$ iff $P_{T^{Normal(b)*}}(\lnot E \land \lnot C) > 0$.
\end{definition}

For $C = Assistant$, the probabilistic normalisation of $T^{**}$ replaces $r^{C}$ with the deterministic law $Assistant \leftarrow$, leading to $P(\lnot NoPens \land \lnot Assistant) = 0$. On the other hand, for $C = Professor$, the normative normalisation replaces $r^{C}$ with $(Prof: p) \leftarrow$, leading to $P(\lnot NoPens \land \lnot Prof) = q$. Thus $Prof$ is judged to be a strong cause of $NoPens$, whereas $Assistant$ isn't a cause at all, in line with the empirical results from \cite{knobe08}. Note that it is only by using the normative probabilities rather than the statistical ones that we get the correct response for $Prof$.

We now show that Def. \ref{hhcplogic} is indeed the correct translation of the HH approach from structural models to CP-logic.

\begin{theorem} $C$ is an HH-actual cause of $E$ in an extended model and context $(M, \succeq, u)$ iff $C$ is an HH-CP-logic-actual cause of $E$ in $(T,b)$, where $(T, b)$ is derived from $(M, \succeq, u)$ in the sense described above.
\end{theorem}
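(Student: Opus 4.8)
The plan is to establish the biconditional by showing that the correspondence between extended structural models and extended CP-theories preserves exactly the set of counterfactual worlds/stories that each definition uses to adjudicate causation. Since Definition \ref{hh} declares $C$ an HH-actual cause iff there exists a witness $w$ with $\lnot C$, $\lnot E$, and $w \succeq s_u$, while Definition \ref{hhcplogic} requires $P_{T^{Normal(b)*}}(\lnot E \land \lnot C) > 0$, the whole argument reduces to proving that a world $w$ is an admissible witness in the structural-model sense precisely when it corresponds to a story of positive probability in $T^{Normal(b)*}$ satisfying $\lnot C \land \lnot E$. I would carry this out by matching the three defining conditions of a witness one at a time against the three features of $T^{Normal(b)*}$.

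First I would fix the translation $(T,b)$ of $(M,\succeq,u)$ as described in Section \ref{sec:hhcp}, and recall that worlds of $M$ correspond to leaves of branches of $T$'s probability trees. The first condition---that $w$ lies in the set of counterfactual worlds permitted by the underlying (un-normalised) definition of actual causation---should correspond to the fact that the story is allowed by $T^*$; here I would appeal to the earlier reformulation in \cite{hallpaper} that $T^*$ captures exactly the counterfactual worlds of the chosen base definition, together with the special treatment of $r(C)$ that distinguishes $T^{**}$ from $T^*$. The conditions $\lnot C$ and $\lnot E$ hold in $w$ translate directly into the event $\lnot E \land \lnot C$ whose probability we measure. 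The crucial remaining condition, $w \succeq s_u$, is where the normality machinery does its work: I would invoke Lemma \ref{lem2} (cited in the excerpt as the formal guarantee that $T^{Normal(b)}$ allows precisely the stories at least as normal as $b$) to conclude that restricting to $T^{Normal(b)*}$ is exactly the restriction to witnesses $w \succeq s_u$.

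Having aligned the three conditions, the equivalence of ``there exists such a witness'' with ``the corresponding probability is strictly positive'' follows because a story has positive probability in a (possibly renormalised) CP-theory iff it survives the refinement, i.e.\ iff none of its chosen disjuncts were eliminated. So existence of an admissible witness is equivalent to $P_{T^{Normal(b)*}}(\lnot E \land \lnot C) > 0$, which is precisely the claim.

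The main obstacle, I expect, will be verifying that the normality ordering $\succeq$ derived from typicality of individual variable values agrees, world-for-world, with the probabilistic-and-normative refinement, and in particular that the per-variable construction of $\succeq$ (a world $s$ is more normal than $s'$ when some variable is more typical and none less typical) matches the disjunct-elimination criteria of $T^{PN(b)}$ and $T^{NN}$. This is essentially the content that Lemma \ref{lem2} must supply, so the soundness of my argument hinges on that lemma; a secondary subtlety is the special handling of $r(C)$, where I must check that using the normal refinement of $r(C)$ in $T^{**}$ rather than the empty-headed $do(\lnot C)$ law correctly accounts for the (ab)normality of $\lnot C$ itself, so that the abnormality of the counterfactual value of $C$ is weighed just as HH intend when they insist a witness be at least as normal as the actual world.
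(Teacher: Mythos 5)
Your overall strategy is the same as the paper's: reduce the biconditional to a correspondence between admissible witnesses $w$ and positive-probability branches of $T^{Normal(b)*}$ satisfying $\lnot C \land \lnot E$, invoke the assumed equivalence of the base definitions for the counterfactual-set condition, and invoke Lemma \ref{lem2} for the condition $w \succeq s_u$. However, your plan of ``matching the three defining conditions one at a time'' papers over the step where the paper's proof does its real work. The three conditions are not encoded by three separate objects: $T^{Normal(b)*}$ is the single composed theory $(T^{**})^{Normal(b)}$, so you must show that a branch $d$ that survives $T^{**}$ and $T^{Normal(b)}$ \emph{separately} also survives their composition, and conversely that a branch of the composition lies in both. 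The forward half of this rests on the observation that every law of $T^{Normal(b)*}$ takes the form it has in either $T^{**}$ or $T^{Normal(b)}$; the reverse half requires a case analysis over intrinsic laws, non-intrinsic laws, and $r(C)$, and for the intrinsic case it needs the fact that no strict norms are in play (otherwise the actual disjunct $r_d$ could receive probability $0$ in the normal refinement and the branch would be killed). None of this appears in your proposal.

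The second gap is that you explicitly defer the treatment of $r(C)$ as a ``secondary subtlety,'' but it is not secondary: the paper's forward direction splits into the cases where $r(C)$ is deterministic versus non-deterministic (in the deterministic case one must argue that the body of $r(C)$ is false in $d$, so the intervention is irrelevant to $d$; in the non-deterministic case one must argue that the empty disjunct survives the normal refinement of $r(C)$ because $\lnot C$ holds in $d$), and the reverse direction needs a parallel analysis to get from $d$ occurring in $T^{Normal(b)*}$ back to $d$ occurring in $T^*|do(\lnot C)$, which is what licenses calling $w$ a witness in the structural-model sense. Your final sentence correctly identifies that this is the point of replacing the empty-headed $do(\lnot C)$ law by the normal refinement of $r(C)$, but identifying the issue is not the same as resolving it. As a proof, the proposal is a correct outline with the two decisive verifications left undone.
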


\begin{proof}
See Appendix.
\end{proof}

\section{The Importance of Counterfactuals}\label{sec:cd}

We mentioned earlier that one criterion for a story to be normal was that it respects the laws/equations. On the other hand definitions of actual causation look at counterfactual stories resulting from an intervention, namely $do(\lnot C)$, which violates the laws. Following HH, Def. \ref{hhcplogic} tries to circumvent the use of this intervention by simply demanding that $\lnot C$ holds. However this solution is not always available, and when it isn't this provides counterintuitive results. We illustrate what goes wrong by using the following theory:

\begin{minipage}{0.18\textwidth}
\begin{align*} 
(A: 0.1) &\leftarrow.\\
C &\leftarrow A.
\end{align*}
\end{minipage}
\begin{minipage}{0.18\textwidth}
\begin{align*} 
E &\leftarrow C.\\
E &\leftarrow \lnot A.
\end{align*}
\end{minipage}
\\ \\
Consider the story where first $A$ occurs, followed by $C$ and $E$. Intuitively, $C$ is a strong cause of $E$, also when taking into account the typicality of $C$. The law with $A$ in its head is intrinsic, and thus $T^{Normal(b)*}$ is:

\begin{minipage}{0.18\textwidth}
\begin{align*} 
A &\leftarrow.\\
C &\leftarrow A.
\end{align*}
\end{minipage}
\begin{minipage}{0.18\textwidth}
\begin{align*} 
E &\leftarrow C.\\
E &\leftarrow \lnot A.
\end{align*}
\end{minipage}
\\ \\
Applying the definition, we get that $P_{T^{Normal(b)*}}(\lnot E \land \lnot C) = 0$, giving the absurd result that $C$ is not a cause of $E$ at all. The problem lies in the fact that in its current form we only allow stories containing $\lnot C$ in the usual, lawful way, rather than stories which contain $\lnot C$ as a result of the intervention $do(\lnot C)$. The problem remains if we use the HP-definition -- as HH does -- instead of our working definition.

We can set this straight by looking instead at $P_{T^{Normal(b)*}}(\lnot E | do(\lnot C))$, so that we re-establish the counterfactual nature of our definition. (As $T^{**}|do(\lnot C)=T^{*}|do(\lnot C)$, this is equivalent to $P_{(T^*)^{Normal(b)}}(\lnot E | do(\lnot C))$, which no longer mentions the artificial theory $T^{**}$.) However, by making this move we no longer take into account the (ab)normality of $C$ itself, whereas research shows extensively that causal judgments regarding an event are often influenced by how normal it was \cite{kahn86,knobe08,hitchcockknobe09}. (This effect is not limited to normative contexts. For example, the lighting of a match is usually judged a cause of a fire, whereas the presence of oxygen is considered so normal that it isn't.) Hence we should factor in this normality, which is expressed by $P_{T^{Normal(b)}}(\lnot C))$. As the following theorem shows, our new choice only makes a difference in a limited set of cases.

\begin{theorem} If $r(C)$ is non-deterministic or $P_{T^{Normal(b)}}(\lnot C) = 0$, then $P_{T^{Normal(b)*}}(\lnot E \land \lnot C) = P_{(T^*)^{Normal(b)}}(\lnot E | do(\lnot C))*P_{T^{Normal(b)}}(\lnot C)$.
\end{theorem}

\begin{proof}
See Appendix.
\end{proof}

If $r(C)$ is deterministic and $P_{T^{Normal(b)}}(\lnot C) > 0$, as in the example shown, then contrary to the left-hand side of the equation, the proposed adjustment on the right-hand side of the equation gives the desired result $1*0.9=0.9$.

\section{The Importance of Probabilities}\label{sec:norms}

Because the HH-approach lacks the quantification of normality offered by probabilities, they dismiss entirely all witnesses that are less normal than the actual world. A direct consequence is that any typical event -- i.e., $P > 0.5$ -- is never a cause, which is quite radical. By using probabilities, this qualitative criterion is no longer necessary: less normal witnesses simply influence our causal judgment less. Further, HH order causes solely by looking at the best witnesses. We now present an example which illustrates the benefit of both abandoning their criterion, and aggregating the normality of witnesses to order causes, without sacrificing the influence of normality.

\subsection{Why it is better to use $(T^{*})^{NN}$ in the first factor}

\begin{quote}
Imagine you enter a contest. If a $10$-sided die lands $1$, you win a car. If not, you get a $100$ more throws. If all of them land higher than $1$, then you also win the car. The first throw lands $1$, and you win the car.
\end{quote}

It's hard to imagine anyone objecting to the judgment that the first throw is a cause of you winning the car. Yet that is exactly what we get when applying the current definition. The following theory $T$ describes the set-up of the contest, where $Throw(i,j)$ means that the $i$-th throw landed $j$ or smaller.
\begin{align*} 
(Throw(1,1): 0.1) &\leftarrow.\\
(Throw(2,1): 0.1) &\leftarrow \lnot Throw(1,1).\\
(Throw(3,1): 0.1) &\leftarrow \lnot Throw(1,1) \land \lnot Throw(2,1).\\
...&\\
WinCar &\leftarrow Throw(1,1).\\
WinCar &\leftarrow \lnot Throw(2,1) \land ...\land \lnot Throw(100,1).
\end{align*}

The normal refinement of $T$ according to the story is given by:
\begin{align*} 
(Throw(1,1): 0.1) &\leftarrow.\\
WinCar &\leftarrow Throw(1,1).\\
WinCar &\leftarrow \lnot Throw(2,1) \land  ...\land \lnot Throw(100,1).
\end{align*}

We get that $P_{(T^*)^{Normal(b)}}(\lnot WinCar | do(\lnot Throws(1,1)))= 0$, and thus $Throws(1,1)$ is not a cause of $WinCar$. In terms of HH: although $\lnot Throw(1,1) \land \lnot Throw(2,1) \land  ...\land \lnot Throw(100,1) \land WinCar$, is very unlikely, it is the only candidate witness. To see why, recall that a witness needs to have $\lnot Throws(1,1)$, and should be at least as normal as the actual world. In every other world with $\lnot Throws(1,1)$, at least one of the $Throws(i,1)$ is true, and hence it is less normal. But in a witness it should hold that $\lnot WinCar$, so there is no witness for $Throws(1,1)$ being a cause of $WinCar$.

On the other hand the theory $(T^*)^{NN}$ in this case is simply equal to $T$, but for the first law being $Throw(1,1) \leftarrow$. Hence the probability of not winning the car given that the first throw does not land $1$ is pretty much $1$, and the value in the equation becomes approximately $0.9$, indicating $Throw(1,1)$ to be a very strong cause of $WinCar$. 

\subsection{Why we should look at all witnesses}

The current example also illustrates why it makes sense to aggregate the strength of all witnesses: the best witness for $Throw(1,1)$ is the story in which $\lnot Throw(1,1) \land Throw(2,1) \land \lnot Throw(3,1) \land ...\land \lnot Throw(100,1)$, with a probability of $0.09$, thus making $Throw(1,1)$ a very minor cause of $WinCar$ under the HH-approach. Put informally, the alternative we should be considering is not one particular outcome -- the best witness -- of the throwing sequence that makes you lose the car, but the set of all such sequences -- all witnesses -- taken together.

\subsection{Why it is better to use $T^{NN}$ in the second factor}

Imagine the same story, with a slight variation to the rules of the contest: you win the car on the first throw if the die lands anything under $7$. Hence the first head changes to $Throw(1,6) : 0.6$, making it a typical outcome. Therefore the first law becomes deterministic in $T^{PN(b)}$, giving that $P_{T^{Normal(b)}}(\lnot Throw(1,6)) = 0$, which again results in the counterintuitive judgment that the first throw in no way caused you to win the car. 

We therefore suggest to use $T^{NN}$ in the second factor of the inequality rather than $T^{Normal(b)}$, making use of the gradual measurement offered by probabilities. Applying this idea to the example, we get the result that $Throw(1,6)$ has causal strength $0.4$. This value is smaller than before, because the cause is now less atypical. 

\subsection{The final definition}\label{sec:final}

This brings us to our final extension to a definition of actual causation. 

\begin{definition}[Extension of actual causation] Given an extended theory $T$, and a branch $b$ such that both $C$ and $E$ hold in its leaf. We define that $C$ is an actual cause of $E$ in $(T,b)$ if and only if $P_{(T^*)^{NN}}(\lnot E | do(\lnot C))*P_{T^{NN}}(\lnot C) > 0$.
\end{definition}

\section{Conclusion}

Our final definition extends our original definition of actual causation (Def. \ref{acgen}) by incorporating the main points raised by HH: (1) it allows normative considerations and (2) is able to factor in the normality of the cause, which is important when we are considering interventions and explanatory power. In addition, it also improves on the HH account in several ways:

\begin{itemize}
\item{CP-logic is more expressive than structural models, hence it can be applied to more examples \cite{vennekens:jelia}.}
\item{Separating normative from statistical normality allows for a more accurate description of the domain.}
\item{Since we no longer refer to the actual world in the second factor, we can use strict norms.}
\item{It is able to deal with all of the examples from the HH-paper equally well as Def. \ref{hhcplogic}.}
\item{It can also handle the previous examples properly, as opposed to Def. \ref{hhcplogic}.}
\end{itemize}

\section{Acknowledgements}
Sander Beckers was funded by a Ph.D. grant of the Agency for Innovation by Science and Technology (IWT-Vlaanderen). 

\appendix
\section{Appendices}

\setcounter{theorem}{1}
\addtocounter{theorem}{-1}

Assume that, for a structural model $M$ and context $u$, we have defined in some way the set of counterfactual worlds $W_{M,u}(C,E)$ that are relevant to decide whether $C$ actually causes $E$ in $(M,u)$. We can then define that $C$ actually causes $E$ in $(M,u)$ if and only if there exists some witness $w \in W_{M,u}(C,E)$ for which $w \models \lnot C \land \lnot E$. Assume also that we have a corresponding definition in the context of CP-logic: for a CP-logic theory $T$ and branch $b$, we have defined the set of counterfactual branches $B_{T,b}(C,E)$, and say that $C$ actually caused $E$ in $b$ if and only if there exists a witness $b' \in B_{T,b}(C,E)$ such that $Leaf_b' \models \lnot C \land \lnot E$. Moreover, assume that these two notions are equivalent, i.e., that $w \in W_{M,u}(C,E)$ if and only if there exists a branch $b \in B_{T,b}(C,E)$ such that $Leaf_b=w$.

To facilitate the proof of Theorem \ref{theorem1}, we introduce the following lemma.

\begin{lemma}\label{lem2} Given an extended model and context $(M, \succeq, u)$, and a theory and branch $(T, b)$ that are derived from $(M, \succeq, u)$ in the sense described earlier. Then for any world $w$, and a branch $d$ of a probability tree from $T$ that corresponds to it, it holds that $w \succeq s_u$ iff $d$ occurs in a probability tree of $T^{Normal(b)}$.
\end{lemma}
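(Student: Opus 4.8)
The lemma is a biconditional relating a semantic notion (world $w$ is at least as normal as the actual world $s_u$ under the HH ranking $\succeq$) to a syntactic/operational notion (the branch $d$ corresponding to $w$ survives in a probability tree of $T^{Normal(b)}$). Let me think about how I'd prove each direction.

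The normality ranking $\succeq$ is defined variable-by-variable: $w \succeq s_u$ iff every variable takes a value in $w$ that is at least as typical as the value it takes in $s_u$, with strictness determined by whether some variable is strictly more typical. The key is that $T^{Normal(b)} = (T^{NN})^{PN(b)}$ is built by eliminating, law by law, exactly the disjuncts corresponding to values that are less normal than the value chosen in $b$. So both sides of the biconditional decompose over the individual variables/laws, and the whole argument should reduce to checking the correspondence one law at a time.

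**Plan and key steps.**

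First I would recall the dictionary between the two frameworks established in Section \ref{sec:hhcp}: variables depending directly on exogenous variables correspond to vacuous non-deterministic laws $X : p \leftarrow$ (possibly carrying a norm $\{q\}$), while variables depending only on endogenous ones correspond to deterministic laws. Since the deterministic laws force their values and hence contribute nothing to the ranking (in HH it is always typical for a deterministically-determined variable to take its dictated value), the entire content of $\succeq$ lives in the non-deterministic laws. So I would reduce both sides to a statement about these laws.

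Second, I would make precise that $w \succeq s_u$ unfolds as a conjunction over the non-deterministic variables: for each such variable $X$, the value $w(X)$ is at least as typical as $s_u(X)$. In the CP-translation, typicality of a value is encoded as: probabilistic typicality via $p > 0.5$ versus $p < 0.5$ (and orderings between probabilities), and normative typicality via the norm $\{q\}$. I would then show that the operation defining $r^{Normal(b)}$ removes precisely the disjuncts of $r$ whose chosen value is \emph{strictly less typical} than the value chosen in $b$ — on the probabilistic reading this is the removal of disjuncts with strictly smaller probability than $r_b$ for applied laws (and the sub-$0.5$, inconsistent disjuncts for unapplied ones), and on the normative reading this is the substitution of the norm probabilities. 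A branch $d$ survives in $T^{Normal(b)}$ iff, for every law, the disjunct it chose was not removed — i.e., iff every variable in $w$ takes a value that is at least as typical as its value in $b$. Since $b$ is the branch corresponding to $s_u$, this is exactly the condition $w \succeq s_u$. I would carry out both directions together by this if-and-only-if matching at the level of each law.

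**The main obstacle.**

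The hard part will be handling the laws that were \emph{not} applied in $b$, where the normalization removes disjuncts that are simultaneously sub-$0.5$ in probability \emph{and} inconsistent with the leaf assignment of $b$. I need to verify that this asymmetric treatment still corresponds exactly to the HH clause ``no variable takes a less typical value'': specifically, that a surviving branch $d$ through such a law cannot introduce a variable value that is less typical than in $s_u$, and conversely that every $w$ with $w \succeq s_u$ genuinely corresponds to a surviving branch. A second delicate point is the interaction with the well-founded semantics when bodies contain negative literals — I must confirm that removing disjuncts does not spuriously block or enable branches in a way that breaks the correspondence $Leaf_d = w$. I would isolate these cases and check that the typicality bookkeeping matches the disjunct-removal bookkeeping exactly, which is where the bulk of the care is needed; the rest is a routine variable-by-variable induction.
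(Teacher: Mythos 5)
Your proposal follows essentially the same route as the paper's proof: reduce to the non-deterministic (exogenously-driven) laws after observing that $w \succeq s_u$ forces $w$ to satisfy the deterministic equations and hence to arise from some context $u'$, and then match, law by law, the HH condition that each variable takes an ``at least as typical'' value against the survival of the corresponding disjunct in $T^{Normal(b)}$. The subtleties you flag --- the asymmetric treatment of laws not applied in $b$ and the interaction with the well-founded semantics --- are genuine, but the paper's own proof does not treat them in any more detail than you do, so your plan is at least as complete as the published argument.
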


\begin{proof}
We know that $b$ is a branch in a probability tree from $T$ such that $Leaf_b$ has the same assignment as $s_u$. Recall that $T$ consists of two categories of laws. First there are those corresponding to the equations for the endogenous variables which depend on other endogenous variables, which are deterministic and thus re-appear in $T^{Normal(b)}$ unchanged. Second there are those corresponding to the endogenous variables which directly depend on the exogenous variables, which take the form $X: p\{q\} \leftarrow$, where the second probability need not be present. 

Assume we have a world $w$ such that $w \succeq s_u$. Any world that satisfies the equations of $M$ follows deterministically from a context, i.e., an assignment to all exogenous variables. As $s_u$ is a world that satisfies the equations, and $w$ is at least as normal, it also satisfies the equations. Hence there is a context $u'$ which determines $w$. In CP-logic, such a context corresponds to choosing particular disjuncts in the heads of all laws from the second category. 

Concretely, this means that for each law/equation of the second category, the value of the corresponding variable $X$ is at least as typical in $w = s_u'$ as it is in $s_u$. Denote by $X_w$ and $X_s$ the values $X$ takes in the worlds $w$ and $s_u$ respectively. By construction of $T^{Normal(b)}$, the disjuncts which are at least typical as $X_s$ -- be it in the statistical or in the normative sense -- still appear in the law for $X$ in $T^{Normal(b)}$, and hence can be chosen when this law is applied. Therefore the branches corresponding to $w$ from the probability trees of $T$ also appear in the probability trees of $T^{Normal(b)}$, be it that the values of the probabilities may have changed. 

Now assume we have a branch $d$ corresponding to a world $w$, that occurs in a probability tree of $T^{Normal(b)}$. We can simply reverse the correspondence between the choices of disjuncts and a context, to obtain that $w \succeq s_u$.  

\end{proof}

\begin{theorem}\label{theorem1} $C$ is an HH-actual cause of $E$ in an extended model and context $(M, \succeq, u)$ iff $C$ is an HH-CP-logic-actual cause of $E$ in $(T,b)$, where $(T, b)$ is derived from $(M, \succeq, u)$ in the sense described in Section 4.
\end{theorem}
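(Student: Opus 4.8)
The plan is to unfold both sides of the biconditional into existential statements over witnesses, and then match them clause by clause using the abstract equivalence between $W_{M,u}(C,E)$ and $B_{T,b}(C,E)$ recorded at the start of the Appendix, together with Lemma~\ref{lem2}. By Definition~\ref{hh} (in its reformulated version) the left-hand side holds iff there is a world $w$ that simultaneously lies in $W_{M,u}(C,E)$, satisfies $w \succeq s_u$, and satisfies $w \models \lnot C \land \lnot E$. On the right-hand side, $P_{T^{Normal(b)*}}(\lnot E \land \lnot C) > 0$ holds iff some branch $d$ in a probability tree of $T^{Normal(b)*}$ has $Leaf_d \models \lnot C \land \lnot E$; since the normal refinement retains only disjuncts with strictly positive probability (norms being restricted to $0 < p < 1$), every edge on a retained branch is positive, so positivity of the probability is equivalent to the mere existence of such a branch. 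The task therefore reduces to showing that the three conditions on $w$ are jointly equivalent to ``$w = Leaf_d$ for some branch $d$ of $T^{Normal(b)*}$ with $Leaf_d \models \lnot C \land \lnot E$.''

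First I would dispatch the base-causation clause. The abstract hypothesis states that $w \in W_{M,u}(C,E)$ iff there is a branch in $B_{T,b}(C,E)$ whose leaf is $w$, where $B_{T,b}(C,E)$ is the collection of counterfactual branches associated with the base definition, i.e., the branches of $T^{*}$. So the clause $w \in W_{M,u}(C,E)$ becomes the purely CP-logic statement that $w$ is the leaf of a branch of $T^{*}$, reducing the whole theorem to a statement relating branches of $T^{*}$ (under the normality side-condition and $\lnot C$) to branches of $T^{Normal(b)*}$.

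Next I would handle the normality clause $w \succeq s_u$ by invoking Lemma~\ref{lem2}, but applied to $T^{**}$ rather than to $T$. Recall that $T^{Normal(b)*} = (T^{**})^{Normal(b)}$, and that $T^{**}$ agrees with $T^{*}$ except that $r(C)$ is replaced by its normal refinement rather than being emptied by $do(\lnot C)$. The argument of Lemma~\ref{lem2} is local to each law: it relies only on deterministic laws being preserved verbatim and on each non-deterministic law retaining exactly the disjuncts whose value is at least as typical (statistically or normatively) as the one chosen in $b$. Both properties hold for $T^{**}$ unchanged, so the same reversible correspondence between choices of disjuncts and normal worlds yields: a branch $d$ occurs in $(T^{**})^{Normal(b)}$ iff it is a branch of $T^{**}$ whose leaf $w$ satisfies $w \succeq s_u$.

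The remaining, and I expect hardest, step is to reconcile $T^{**}$ with $T^{*}$ under the constraint $\lnot C$, where the special treatment of $r(C)$ does the real work. On the HH side, $w \succeq s_u$ ranges over the typicality of \emph{every} variable, including the value taken by $C$ itself, so among the witnesses with $w \models \lnot C$ the normality filter retains exactly those for which $\lnot C$ is at least as typical as the actual value of $C$. On the CP side, the normal refinement of $r(C)$ inside $T^{**}$ keeps the $\lnot C$-producing disjunct under precisely the same pruning rule, after which restricting to $Leaf_d \models \lnot C$ selects those branches; and on branches satisfying $\lnot C$, the theory $T^{**}$ and the intervened theory $T^{*}$ (whose $r(C)$ has an empty head, forcing $\lnot C$) produce the same leaves, since both fix $C$ to $\false$ and leave every other law untouched. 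Chaining these reductions gives a leaf-preserving bijection between the normal branches of $T^{Normal(b)*}$ satisfying $\lnot C \land \lnot E$ and the normal witnesses of $T^{*}$ satisfying $\lnot C \land \lnot E$, which is the biconditional. The delicate point to verify is exactly this agreement: that the per-variable typicality comparison built into $\succeq$ for the variable $C$ coincides, in both the probabilistic and the normative case, with the disjunct-pruning rule defining the normal refinement of $r(C)$ — so that the refinement neither admits a witness $w \succeq s_u$ would forbid nor forbids one it would allow.
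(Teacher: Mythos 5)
Your strategy is, at its core, the same as the paper's: both proofs rest on the abstract equivalence between $W_{M,u}(C,E)$ and $B_{T,b}(C,E)$ posited at the start of the Appendix, on Lemma~\ref{lem2} to handle the normality clause, and on an analysis of how $T^{**}$, $T^{Normal(b)}$ and $T^{*}|do(\lnot C)$ relate on branches satisfying $\lnot C$. Where you genuinely differ is in the organization: the paper proves the two implications separately, applies Lemma~\ref{lem2} only to $T$, and then argues law by law that a branch lies in a tree of $T^{Normal(b)*}$ precisely when its chosen disjuncts survive in both $T^{**}$ and $T^{Normal(b)}$; you instead re-apply Lemma~\ref{lem2} to $T^{**}$ itself, so that $T^{Normal(b)*}=(T^{**})^{Normal(b)}$ is dispatched in one stroke and the theorem becomes a chain of equivalences. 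That reorganization is legitimate and arguably cleaner, but it obliges you to actually re-prove the lemma for $T^{**}$, which is not of the two-category form assumed in the paper's proof of Lemma~\ref{lem2}: the intrinsic laws have been collapsed to deterministic laws selecting the actual (possibly atypical) disjunct, and $r(C)$ has already been refined. Your ``local to each law'' remark names the right reason this still works, but it is asserted rather than shown.

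Two steps need repair before this counts as a proof. First, the intermediate claim that ``on branches satisfying $\lnot C$, the theory $T^{**}$ and the intervened theory $T^{*}$ produce the same leaves, since both fix $C$ to $\false$'' is false as stated: when $r(C)$ is deterministic, or when the actual disjunct of $r(C)$ was strictly more typical than the empty one, the normal refinement of $r(C)$ still forces $C$ whenever its body holds, so $T^{**}$ has no $\lnot C$-branch with that body true, while $T^{*}|do(\lnot C)$ does. The correspondence only holds after also imposing $w \succeq s_u$, which forces the body of a deterministic $r(C)$ to be false and forces $\lnot C$ to be at least as typical otherwise; your concluding sentence does restrict to normal witnesses, which rescues the bijection, but the step as written skips exactly the case split on $r(C)$ (deterministic versus non-deterministic) that occupies most of the paper's argument in both directions. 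Second, the ``delicate point'' you flag at the end --- that the per-variable typicality comparison built into $\succeq$ coincides with the disjunct-pruning rule defining the normal refinement of $r(C)$ --- is not optional: it is the substance of the translation in Section 4 together with the no-strict-norms restriction $0<p<1$, which the paper invokes explicitly in the right-to-left direction to guarantee that retained disjuncts carry strictly positive probability. Until that verification is carried out, the chain of equivalences is a plan rather than a proof.
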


\begin{proof}
We begin with the implication from left to right. So assume we have an extended model and context $(M, \succeq, u)$, such that  $C$ and $E$ hold in $s_u$, and there is at least one witness $w$ of $C$ being an actual cause of $E$ in $(M, u)$ such that $w \succeq s_u$. 

By the assumptions made above, we get that $C$ is an actual cause of $E$ in $(T, b)$, and more specifically that any branch $d$ that corresponds to $w$ is a witness of this. Thus $d$ appears in a probability tree of $T^*|do(\lnot C)$. 

By Lemma \ref{lem2}, we know that such a branch $d$ also appears in a probability tree of $T^{Normal(b)}$. 

We look separately at the two options regarding $r(C)$. First we assume that $r(C)$ is non-deterministic. Since $d$ occurs in a tree of $T^{Normal(b)}$, and $\lnot C$ holds in it, the empty disjunct remains present in the normal refinement of $r(C)$. By definition, $T^{**}$ is simply $T^*$ with the normal refinement of $r(C)$. Therefore $d$ also occurs in a tree of $T^{**}$. 

Second, assume $r(C)$ is deterministic. Then $T^{**}=T^*$. Since $d$ occurs in a tree of $T^{Normal(b)}$, which obviously contains $C$ in the head of $r(C)$, the body for $r(C)$ cannot be satisfied in $d$. Thus the intervention $do(\lnot C)$ is irrelevant to $d$, and again we can conclude that $d$ also occurs in a tree of $T^{**}$. 

So in all cases we have that $d$ occurs both in a tree of $T^{Normal(b)}$, and in a tree of $T^{**}$. This implies that the disjuncts chosen in the laws applied in $d$ occur in the versions these laws take in both of these theories, with possibly different but strictly positive probabilities. Note that every law from $T^{Normal(b)*}$ either takes the form it has in $T^{**}$ or it takes the form it has in $T^{Normal(b)}$. Therefore $d$ also appears in $T^{Normal(b)*}$. It being a witness, $\lnot C$ and $\lnot E$ hold in it, and thus the stated probability is strictly positive.

Now we continue with the reverse implication. Assume we have an extended theory $T$, a story $b$ such that $C$ and $E$ hold in it, and $P_{T^{Normal(b)*}}(\lnot E \land \lnot C) > 0$. This implies the existence of a branch $d$ in $T^{Normal(b)*}$ such that both $\lnot C$ and $\lnot E$ holds. 

Say $r$ is a law from $T^{Normal(b)*}$. If $r$ is intrinsic and not $r(C)$, it is deterministic, containing the single (possibly empty) disjunct $r_d$ with associated probability $1$. As $r_d$ was the actual choice from $b$, by construction $r_d$ also appears in the normal refinement of $r$, although the probability may be different. However, as long as we do not have strict norms, i.e., norms where $p$ or $q$ is $1$, this probability will be strictly positive. A strict norm means that a violation of it is considered entirely abnormal, analogous to the occurrence of an event with zero probability. Since HH treat norms identical to statistical normality, and since the actual world was possible, it follows that the actual world is not entirely abnormal. Hence even if $r_d$ was a violation of a norm, it will not have been a strict norm. (Our final definition from Section 5.5 does allow for strict norms.) Thus, we conclude that $r_d$ occurs in the head of the versions of the law $r$ we find in both $T^*$ and $T^{Normal(b)}$. Because $r$ is not $r(C)$, we can say the same about $T^*|do(\lnot C)$.

If $r$ is not intrinsic and not $r(C)$, it contains all of its original disjuncts when it occurs in $T^*$. Therefore it takes the same form in $T^{Normal(b)*}$ as it does in $T^{Normal(b)}$. Again we conclude that $r_d$ occurs in the head of the versions of the law $r$ we find in each of $T^*$, $T^{Normal(b)}$ and $T^*|do(\lnot C)$.

This leaves us to consider $r(C)$. By definition, $T^{Normal(b)*}$ contains the same version as $T^{Normal(b)}$. From this and the previous paragraphs we can already conclude that any branch occuring in a tree of $T^{Normal(b)*}$ also occurs in a tree of $T^{Normal(b)}$. More specifically this holds for $d$. Thus by Lemma \ref{lem2}, it holds for the corresponding world $w$ that $w \succeq s_u$.

If the body for $r(C)$ is false in $d$, then the precise form of the head of $r(C)$ is irrelevant for $d$. As the head of $r(C)$ is the only difference between $T^{**}$ and  $T^*|do(\lnot C)$, we can again conclude that $d$ also occurs in $T^*|do(\lnot C)$. 

Leaves us to consider the case that the body for $r(C)$ is true in $d$. From the fact that $d$ -- in which $\lnot C$ holds -- occurs in $T^{Normal(b)}$, we can infer that $r(C)$ is a non-deterministic law. Taken together with the knowledge that the disjunct containing $C$ was chosen in $b$, it follows that the normal refinement of $r(C)$ contains both $C$ and the empty disjunct in its head. Furthermore, in $d$ the empty disjunct was chosen. These observations taken together imply that the disjunct of $r(C)$ chosen in $d$ occurs in the head of the versions of $r(C)$ we find in both $T^{**}$ and $T^*|do(\lnot C)$. Once more we conclude that $d$ also occurs in $T^*|do(\lnot C)$.

Thus $d$ is a witness for $C$ being an actual cause of $E$ in $(T,b)$. Therefore the world $w$ corresponding to $d$ is a witness for $C$ being an actual cause of $E$ in $(M,u)$. Together with the fact that $w \succeq s_u$, the conclusion follows. 

\end{proof}

\begin{theorem} If $r(C)$ is non-deterministic or $P_{T^{Normal(b)}}(\lnot C) = 0$, then $P_{T^{Normal(b)*}}(\lnot E \land \lnot C) = P_{(T^*)^{Normal(b)}}(\lnot E | do(\lnot C))*P_{T^{Normal(b)}}(\lnot C)$.
\end{theorem}

\begin{proof}

First we examine the case where $P_{T^{Normal(b)}}(\lnot C) = 0$. This implies that the right-hand side of the equation is $0$. Also, any branch from a tree $T^{Normal(b)*}$ occurs as well in a tree of $T^{Normal(b)}$, so $P_{T^{Normal(b)*}}(\lnot C) = 0$ and the left-hand side is also equal to $0$. 

This leaves us to consider the case where $P_{T^{Normal(b)}}(\lnot C) > 0$ and  $r(C)$ is non-deterministic.

In this case $P_{T^{Normal(b)*}}(\lnot C) = P_{T^{Normal(b)}}(\lnot C)$, so we have: $P_{T^{Normal(b)*}}(\lnot E \land \lnot C) = P_{T^{Normal(b)*}}(\lnot E \land \lnot C)*P_{T^{Normal(b)}}(\lnot C)/P_{T^{Normal(b)*}}(\lnot C) = P_{T^{Normal(b)*}}(\lnot E | \lnot C)*P_{T^{Normal(b)}}(\lnot C)$. 

Further, conditioning on $\lnot C$ when $C$ only occurs in a vacuous non-deterministic law, is identical to looking at the intervention $do(\lnot C)$, thus the list of equalities continues: 

$ = P_{T^{Normal(b)*}}(\lnot E | do(\lnot C))*P_{T^{Normal(b)}}(\lnot C)$. Also, $T^{Normal(b)*}|do(\lnot C)=(T^*)^{Normal(b)*}|do(\lnot C)$, which brings us to the desired conclusion.

\end{proof}

\bibliographystyle{aaai}
\bibliography{aaai_mybib}

\begin{thebibliography}{}

\bibitem[\protect\citeauthoryear{Beckers and Vennekens}{}]{hallpaper}
Beckers, S., and Vennekens, J.
\newblock Towards a general framework for actual causation using cp-logic.
\newblock {\em Unpublished}.
\newblock \url{http://arxiv.org/pdf/1410.7063v3.pdf}.

\bibitem[\protect\citeauthoryear{Beckers and Vennekens}{2012}]{beckers}
Beckers, S., and Vennekens, J.
\newblock 2012.
\newblock Counterfactual dependency and actual causation in cp-logic and
  structural models: a comparison.
\newblock In Kersting, K., and Toussaint, M., eds., {\em Proceedings of the
  Sixth STAIRS}, volume 241 of {\em Frontiers in Artificial Intelligence and
  Applications},  35--46.

\bibitem[\protect\citeauthoryear{Hall and Paul}{2003}]{hall03}
Hall, N., and Paul, L.~A.
\newblock 2003.
\newblock {\em Causation and Preemption}.
\newblock Oxford University Press.

\bibitem[\protect\citeauthoryear{Hall}{2004}]{hall04}
Hall, N.
\newblock 2004.
\newblock Two concepts of causation.
\newblock In {\em Causation and Counterfactuals}.

\bibitem[\protect\citeauthoryear{Hall}{2007}]{hall07}
Hall, N.
\newblock 2007.
\newblock Structural equations and causation.
\newblock {\em Philosophical Studies} 132(1):109--136.

\bibitem[\protect\citeauthoryear{Halpern and Hitchcock}{}]{hh14}
Halpern, J.~Y., and Hitchcock, C.
\newblock Graded causation and defaults.
\newblock {\em British Journal for the Philosophy of Science} forthcoming.
\newblock \url{http://arxiv.org/pdf/1309.1226v1.pdf}.

\bibitem[\protect\citeauthoryear{Halpern and Pearl}{2005}]{halpernpearl05a}
Halpern, J., and Pearl, J.
\newblock 2005.
\newblock Causes and explanations: A structural-model approach. part {I}:
  Causes.
\newblock {\em The British Journal for the Philosophy of Science}
  56(4):843--87.

\bibitem[\protect\citeauthoryear{Hitchcock and Knobe}{2009}]{hitchcockknobe09}
Hitchcock, C., and Knobe, J.
\newblock 2009.
\newblock Cause and norm.
\newblock {\em Journal of Philosophy} 106:587--612.

\bibitem[\protect\citeauthoryear{Kahneman}{1986}]{kahn86}
Kahneman, Daniel;~Miller, D.~T.
\newblock 1986.
\newblock Norm theory: comparing reality to its alternatives.
\newblock {\em Psychological Review} 94(2):136--153.

\bibitem[\protect\citeauthoryear{Knobe and Fraser}{2008}]{knobe08}
Knobe, J., and Fraser, B.
\newblock 2008.
\newblock Causal judgment and moral judgment: Two experiments.
\newblock In Sinnott-Armstrong, W., ed., {\em Moral Psychology}. MIT Press.

\bibitem[\protect\citeauthoryear{Moore}{2009}]{moore09}
Moore, M.~S.
\newblock 2009.
\newblock {\em Causation and Responsibility}.
\newblock OUP Oxford.

\bibitem[\protect\citeauthoryear{Pearl}{2000}]{pearl:book}
Pearl, J.
\newblock 2000.
\newblock {\em Causality: Models, Reasoning, and Inference}.
\newblock New York, NY, USA: Cambridge University Press.

\bibitem[\protect\citeauthoryear{Shafer}{1996}]{shafer:book}
Shafer, G.
\newblock 1996.
\newblock {\em The Art of Causal Conjecture}.
\newblock Artificial Management. MIT Press.

\bibitem[\protect\citeauthoryear{Vennekens, Denecker, and
  Bruynooghe}{2009}]{vennekens09}
Vennekens, J.; Denecker, M.; and Bruynooghe, M.
\newblock 2009.
\newblock {CP}-logic: A language of probabilistic causal laws and its relation
  to logic programming.
\newblock {\em Theory and Practice of Logic Programming} 9:245--308.

\bibitem[\protect\citeauthoryear{Vennekens, Denecker, and
  Bruynooghe}{2010}]{vennekens:jelia}
Vennekens, J.; Denecker, M.; and Bruynooghe, M.
\newblock 2010.
\newblock Embracing events in causal modelling: {I}nterventions and
  counterfactuals in {CP}-logic.
\newblock In {\em JELIA},  313--325.

\bibitem[\protect\citeauthoryear{Vennekens}{2011}]{vennekens11}
Vennekens, J.
\newblock 2011.
\newblock Actual causation in cp-logic.
\newblock {\em Theory and Practice of Logic Programming} 11:647--662.

\end{thebibliography}

\end{document}